\newtheorem{definition}{Definition}
\newtheorem{proposition}{Proposition}
\DeclareMathOperator*{\argmin}{argmin}
\DeclareMathOperator*{\argmax}{argmax}
\DeclareMathOperator{\diam}{D}
\DeclareMathOperator{\radi}{R}
\DeclareMathOperator{\centerD}{D-center}
\DeclareMathOperator{\centerR}{R-center}
\DeclareMathOperator{\intracldist}{distance}
\newcommand{\rmax}{\ensuremath{R_{max}}}
\newcommand{\dmax}{\ensuremath{D_{max}}}
\newcommand{\thr}{\ensuremath{T}}
\newcolumntype{R}{@{\extracolsep{\fill}}r@{\extracolsep{0pt}}}
\renewenvironment{proof}[1][\proofname]{\par
  \pushQED{\qed}%
  \normalfont \topsep6\p@\@plus6\p@\relax
  \trivlist
  \item\relax
      {\itshape
    #1\@addpunct{.}}\hspace\labelsep\ignorespaces
}{%
  \popQED\endtrivlist\@endpefalse
}
\begin{document}

\title{Clustering to the Fewest Clusters Under Intra-Cluster Dissimilarity Constraints}

\author{\IEEEauthorblockN{Jennie Andersen}
\IEEEauthorblockA{University of Lyon, INSA Lyon, CNRS UMR 5205\\
jennie.andersen@liris.cnrs.fr}
\and
\IEEEauthorblockN{Brice Chardin}
\IEEEauthorblockA{ISAE-ENSMA, LIAS\\
brice.chardin@ensma.fr}
\and
\IEEEauthorblockN{Mohamed Tribak}
\IEEEauthorblockA{SRD, LIAS\\
mohamed.tribak@srd-energies.fr}
}

\maketitle

\begin{abstract}
This paper introduces the equiwide clustering problem, where valid partitions must satisfy intra-cluster dissimilarity constraints.
Unlike most existing clustering algorithms, equiwide clustering relies neither on density nor on a predefined number of expected classes, but on a dissimilarity threshold.
Its main goal is to ensure an upper bound on the error induced by ultimately replacing any object with its cluster representative.
Under this constraint, we then primarily focus on minimizing the number of clusters, along with potential sub-objectives.

We argue that equiwide clustering is a sound clustering problem, and discuss its relationship with other optimization problems, existing and novel implementations as well as approximation strategies.
We review and evaluate suitable clustering algorithms to identify trade-offs between the various practical solutions for this clustering problem.
\end{abstract}

\section{Introduction}

\label{sec:introduction}

When the expected number of classes is unknown, there exists no universal method to determine the number of clusters \cite{milligan1985}.
To circumvent this problem, several algorithms instead rely on density to detect groups of elements.
However, density-based clustering can lead to elongated clusters, which allows for relatively dissimilar elements to belong to the same cluster.

In this work, we are concerned with clustering problems for which the number of clusters is to be determined, with strong guaranties on the dissimilarity of elements belonging to the same cluster.
We therefore introduce the equiwide clustering problem, where valid partitions must satisfy intra-cluster dissimilarity constraints.
Unlike most existing clustering algorithms, equiwide clustering relies neither on density nor on a predefined number of expected classes, but on a dissimilarity threshold.
Its main goal is to ensure an upper bound on the error induced by ultimately replacing any object with its cluster representative.
Under this constraint, we then primarily focus on minimizing the number of clusters, and discuss potential sub-objectives.
Equiwide clustering aims at solving problems for which the dissimilarity measure can be reasoned upon by domain experts.

\subsubsection*{Industrial use case}

SRD is an electricity distribution network operator that manages its network on a regional level over a surface of 7000~km\textsuperscript{2}, with 12,323 km of network lines, 16 step-down substations and more than 8000 public distribution substations. To optimize the topology of its network, SRD models the behavior of its step-down substations, that connect the national transmission network (with voltages ranging from 63~kV to 400~kV) to its local distribution network (with a voltage of 20~kV). Since there are 180 substations output power lines, modeling each of them individually is impractical. A first step consists in grouping them into categories to then define a common model.

Yet, what categories exist is unknown. In particular, their number is to be determined. Network operators at SRD are only able to specify a similarity measure and a threshold under which two substation outputs could be considered as belonging to the same category.
The goal is, hopefully, to identify the fewest number of categories in order to simplify subsequent modeling work in the network optimization process.

\medskip

Other practical applications of equiwide clustering are taken from related optimization problems (presented in Section~\ref{sec:related}), such as scheduling or radio frequency assignment \cite{johnson1996}.
In this paper, we discuss this kind of clustering problem and solutions from the state of the art.

\subsubsection*{Paper organization}

In the remainder of this section, we provide preliminary definitions and formalize the equiwide clustering problem.
In Section~\ref{sec:related}, we discuss its relationship with other optimization problems, along with other approaches from the state of the art.
In Section~\ref{sec:eqw}, we consider various implementation strategies and approximations.
We then provide an experimental evaluation of relevant algorithms in Section~\ref{sec:experiments}.

\subsection{Preliminary definitions}
\label{sec:preliminaries}

\begin{definition}[Population]
Let $X = \left\{x_1, ..., x_n\right\}$ be a finite set of $n$ elements to be partitioned.
\end{definition}

\begin{definition}[Partition]
Let $P = \left\{C_1, ..., C_k\right\}$ be a partition of $X$:
\begin{align*}
    & \textstyle\bigcup_{C_i \in P} C_i = X && \tag{cover} \\
    & \forall C_i \in P, C_i \neq \emptyset && \tag{non-emptiness}\\
    & \forall C_i, C_j \in P, i \neq j \Rightarrow C_i \cap C_j = \emptyset && \tag{pairwise disjunction}
\end{align*}
\end{definition}

\begin{definition}[Dissimilarity]
\label{def:dissimilarity}
Let $d$ be a dissimilarity on pairs of elements of $X$, $\forall a, b \in X$:
\begin{align*}
    & d(a, a) = 0 \\
    & d(a, b) = d(b, a) \tag{symmetry} \\
    & d(a, b) \geq 0 \tag{positivity}
\end{align*}
\end{definition}

We consider two concepts of wideness to characterize the homogeneity of a subset of elements of $X$, its diameter and its radius.

\begin{definition}[Diameter]
The diameter of a subset $A$ of $X$ is the maximum dissimilarity between pairs of elements of $A$.
$$\diam(A) = \max_{a, b \in A} d(a, b)$$
\end{definition}

\begin{definition}[Radius]
The radius of a subset $A$ of $X$, first defined as a measure of homogeneity of a cluster \cite{hansen1997cluster}, is the minimum eccentricity of any element of $A$.
$$\radi(A) = \min_{a \in A} \max_{b \in A} d(a, b)$$
\end{definition}

\begin{definition}[Homogeneity]
A subset $A$ of $X$ is called homogeneous (specifically diameter-homogeneous or radius-homogeneous) under a provided threshold $\thr$ if its wideness (diameter or radius) is at most equal to this threshold. In the remainder of this paper, a diameter-specific threshold will be noted $\dmax$ and a radius-specific threshold will be noted $\rmax$.

A partition $P$ of $X$ is called homogeneous if each set in $P$ is homogeneous.
\end{definition}

In the next two definitions, we are concerned with the identification of a representative element for a subset $A$ of $X$. This element, called the \textit{center} of $A$, has the smallest maximal dissimilarity with every element of $A$.

\begin{definition}[Center, diameter -- normed vector space]
If $X$ is a subset of a normed vector space, the center is the midpoint of the two furthest elements in $A$, i.e., the pair of elements whose dissimilarity is equal to the diameter. This center is not necessarily an element of $A$ itself.

$$\centerD(A) = \frac{a + b}{2}, \quad (a, b) = \argmax_{(i, j) \in A \times A}d(i, j)$$
\end{definition}

\begin{definition}[Center, radius]
The central element of $A$ is the element whose greatest dissimilarity with every element of $A$ is minimal. This center is not necessarily the medoid, as a medoid minimizes the average dissimilarity.

$$\centerR(A) = \argmin_{a \in A} \underset{b \in A}{\max\vphantom{g}} d(a, b)$$
\end{definition}

\subsection{Problem statement}

\begin{definition}[Equiwide clustering]
\label{def:problem_statement}
For a given population $X$, a dissimilarity $d$ and a threshold $\thr$, we are concerned with the identification of a partition $P$ of $X$ so that:

\begin{itemize}
    \item $P$ is homogeneous under $\thr$,
    \item the number of clusters $|P|$ is minimal, i.e., there exists no partition $P'$ homogeneous under $\thr$ where $|P'| < |P|$.
\end{itemize}
\end{definition}

When we are concerned with the identification of a representative element for each cluster with an upper-bound on its dissimilarity with every element of the cluster, a diameter constraint will not provide a tight upper bound if a center (diameter) cannot be computed. This problem notably occurs when the elements to be partitioned are not defined within a normed vector space.
In that case, a radius constraint offers tighter dissimilarity guaranties between a representative element---the center (radius) of each cluster---and every other element of the cluster.
While conceptually similar, these two types of wideness constraints (diameter or radius) introduce significant variations on how the problem can be resolved efficiently.

\subsubsection*{Sub-objectives}

This clustering problem may have multiple solutions with the same number of clusters, and sub-objectives can be added to discriminate between them. These sub-objectives may vary depending on the data to be partitioned. In this paper, we consider three sub-objectives.

\paragraph*{Minimizing the maximum width}

This subgoal aims at providing tighter upper bounds on the intra-cluster dissimilarity constraint considered---either the diameter or the radius.

$$\text{minimize} \max_{C_i \in P}\diam(C_i) \text{ \quad or \quad minimize} \max_{C_i \in P}\radi(C_i)$$

\paragraph*{Minimizing the within-cluster sum of dissimilarities}

This global homogeneity measure of a partition has been considered as a sub-objective in related work \cite{duong2017constrained}.

$$ \text{minimize} \sum_{C_i \in P} \sum_{a, b \in C_i} d(a,b) $$

\paragraph*{Maximizing the variance of cluster sizes}

In the context of SRD described in Section~\ref{sec:introduction}, experts prefer to obtain few large clusters and some outliers, rather than clusters of average size.
The variance on cluster sizes captures this criterion and has therefore been considered in this study.

$$\text{maximize} \sum_{C_i \in P}|C_i|^2$$

\section{Related Work}

\label{sec:related}

\subsection{Equivalent problems}

\label{sec:related-problems}

The equiwide clustering problem---as defined in definition \ref{def:problem_statement}, without sub-obje\-ctives---has been considered in graph theory. Some theoretical results therefore apply, as well as algorithms to compute partitions with a minimal number of clusters. Yet, since sub-objectives are absent from these formulations, resulting partitions may not be entirely satisfactory.

Let $G = (X, E)$ be a graph where $E = \{\{x_i, x_j\} \mid d(x_i, x_j) \leq \thr\}$, namely there is an edge between vertices if and only if the dissimilarity between them is at most equal to the provided threshold, i.e., adjacent vertices are compatible.

\paragraph*{Minimum clique cover} The diameter-based clustering problem is equivalent to the minimum (vertex) clique cover problem in graph theory. Each resulting clique represents a cluster, within which each pair of element is compatible, i.e., their dissimilarity is at most equal to the diameter. This problem is NP-hard \cite{karp1972}.

\paragraph*{Graph coloring} Let $G'$ be the complement graph of $G$, i.e., vertices of $G'$ are adjacent if and only if they are incompatible.
The minimum clique cover on graph $G$ is equivalent to coloring vertices of $G'$ using a minimal number of colors. The resulting colors represent clusters: vertices of the same color belong to the same cluster.

\paragraph*{Minimal cardinality dominating set} A dominating set of $G$ is a subset $D$ of $X$ so that every vertex of $X \setminus D$ is adjacent to at least one vertex of $D$. The radius-based clustering problem is equivalent to the minimal cardinality dominating set problem in graph theory. Each dominating element is the center (radius) of a cluster. This problem is NP-hard \cite{garey1979}.

\subsection{Related clustering algorithms}

Only a few existing clustering techniques meet, at least partially, the requirements of equiwide clustering. For instance, the well-known k-means algorithm requires the number of clusters to be provided and cannot respect a dissimilarity-based constraint.
A constraint on a maximal diameter can be added by means of instance-level cannot-link constraints (COP-Kmeans) \cite{copkmeans}, but the number of clusters is still required.
While density-based clustering algorithms can be used to identify the number of clusters, they do not enforce wideness constraints on the resulting partition, and are therefore not considered in this paper.

\subsubsection{Hierarchical clustering}
Hierarchical clustering techniques build a hierarchy of clusters that can later be used to create a partition. A cut-off criterion can be defined, either as a number of clusters or as a dissimilarity threshold. The complete-link agglomerative hierarchical clustering is a fast algorithm on which a maximum diameter constraint can be set.
The complete-link criterion merges two clusters $A$ and $B$ based on the maximum dissimilarity between pairs of elements of each cluster.
$$\intracldist\left(A,B\right) = \max_{a \in A, b\in B}d\left(a,b\right)$$

If the diameter threshold is applied on this distance, the maximum diameter criterion is respected.
While the complete-link agglomerative hierarchical clustering with a cut-off distance is a valid equiwide clustering method with respect to its main constraint (wideness -- diameter only), it does not provide any optimality guarantee. In particular, this algorithm does not attempt to minimize the number of clusters. It is also not applicable to a radius constraint, apart from considering the pairwise dissimilarity as a loose upper bound.

\subsubsection{Graph coloring}
Graph coloring algorithms that minimize the number of colors can be used to cluster elements with a maximum diameter on the associated graph (see Section~\ref{sec:related-problems}).
Hansen and Delattre \cite{hansen1978complete} proposed an algorithm, called \textsc{clustergraph}, that also minimizes the maximum diameter of clusters.
This algorithm relies on an external optimal graph coloring algorithm.
While this algorithm provides a partition that is optimal on both the main objective---minimizing the number of clusters---and the maximum diameter sub-objective, it is also not usable with a radius constraint.

\subsubsection{Constraint programming}
A constraint programming algorithm has been proposed by Dao et al. \cite{duong2017constrained} for constrained clustering. Among the existing constraints, it is possible to specify a maximum diameter. A few objective functions have been defined, including minimizing the maximum diameter or minimizing the within-cluster sum of dissimilarities. With their algorithm, it is possible to find an optimal solution for these objectives or to stop at the first solution satisfying the maximum diameter constraint.
While their proposition does not directly minimize the number of clusters---this value is an input of the algorithm---a partition with a minimum number of clusters can be identified by iterating over the number of clusters until a valid solution is found.

In order to improve the run time of the algorithm, the authors suggest to reorder elements according to a Furthest-Point-First (FPF) heuristic: the first element is the one furthest from all the other elements, then the following elements are ordered by decreasing dissimilarity with all previously identified furthest points.
This heuristic is used to guide the search strategy, but also to provide a lower and upper bound on the optimal diameter during the optimization process.

\subsubsection{Cannot-link constraints}
An intra-cluster dissimilarity constraint is a cluster-level constraint, but it can also be expressed as an instance-level cannot-link constraint. Two elements affected by a cannot-link constraint cannot belong to the same cluster. Consequently, a maximum diameter constraint can be translated into multiple cannot-link constraints between each pair of elements whose dissimilarity is greater than $D_{max}$.
These instance-level constraints have been considered to augment existing clustering algorithms such as DBSCAN \cite{cdbscan} and k-means \cite{copkmeans}.

\section{Equiwide clustering}

\label{sec:eqw}

\subsection{Constraint programming and linear programming formulations}
The problem of minimizing the number of clusters with respect to a maximum radius or diameter can be formulated as a constrained optimization problem.
This formulation allows for solvers to be used as a baseline during the evaluation of the various implementations.

\subsubsection*{Radius constraint}
Let $r$ be an $n \times n$ Boolean matrix. The value $r_{ij}$ represents whether element $i$ belongs to the cluster centered on element $j$.
The value $r_{jj}$ indicates if element $j$ is the \textit{center} of a cluster.
\begin{subequations}
    \label{eqn:lp-radius}
    \begin{alignat}{3}
        & \text{minimize }   &       & \sum_j r_{jj} \label{eqn:radius-objective}\\
        & \text{subject to } & \quad & \sum_j r_{ij} = 1                 & \quad & i \in \{1, \dots, n\}   \label{eqn:radius-unique}\\
        &                    &       & d(i,j) \times r_{ij} \leq R_{max} & \quad & i,j \in \{1, \dots, n\} \label{eqn:radius-threshold}\\
        &                    &       & r_{jj} = \max_i(r_{ij})           & \quad & j \in \{1, \dots, n\}   \label{eqn:radius-center}\\
        &                    &       & r_{ij} \in \{0,1\}                & \quad & i,j \in \{1, \dots, n\}
    \end{alignat}
\end{subequations}

The first constraint (\ref{eqn:radius-unique}) asserts that every element is represented exactly once. The second constraint (\ref{eqn:radius-threshold}) asserts that, if $j$ represents $i$, then the dissimilarity between $i$ and $j$ is at most equal to the maximum radius. Constraint (\ref{eqn:radius-center}) asserts that, if $i$ is represented by $j$, then $j$ is a center.

The objective function (\ref{eqn:radius-objective}) is the number of clusters, to be minimized.
It is also possible to add a sub-objective minimizing the sum of dissimilarities between elements and their center. If the dissimilarities are not all zero, then the objective function becomes: 
$$
\sum_j r_{jj} + \frac{\sum_{i,j} d(i, j) \times r_{ij}}{\sum_{i,j} d(i, j)} 
$$
The second half of this objective function belongs to $[0, 1)$, ensuring that the optimal number of clusters will not be influenced.

\subsubsection*{Diameter constraint}
The maximum diameter problem can also be formulated as a constraint programming problem. Let $l_i$ be the cluster label of element $i$ and $k$ the number of clusters. 
\begin{subequations}
    \begin{alignat}{2}
     &\text{minimize } && k \\
     &\text{subject to }\  &&  l_i \neq l_j  \quad i, j \in \{1, \dots, n\}\ |\ d(i,j) > D_{max} \\
     &&& k = \max_i(l_i) \label{eqn:diameter-k}\\
     &&& l_i \in \{0, \dots, n-1\}                    \qquad\qquad  i \in \{1, \dots, n\}
    \end{alignat}
\end{subequations}

It is possible to reduce the search space by fixing labels for an independent set of elements, i.e., a set of elements that are pairwise incompatible. Since finding the maximum independent set in a graph is NP-hard, this process has to rely on heuristics to be efficient---similar to the FPF heuristic in \cite{duong2017constrained}.

\medskip

In order to transform these constrained optimization problem into integer linear programming problems, maximum equality constraints (\ref{eqn:radius-center}) and (\ref{eqn:diameter-k}) should be replaced by multiple inequalities. For instance in the radius formulation, the expression $r_{jj} = \max_i(r_{ij})$ can be translated into $\forall i : r_{jj} \geq r_{ij}$.

\subsection{Decomposition into subproblems}

Instead of considering the problem as a whole, equiwide clustering can be split into three processing steps:
\begin{enumerate}
    \item Enumerate all maximal homogeneous sets under the given threshold.
    \item Compute a minimal set cover of the population by homogeneous sets.
    \item Assign each element to a unique set of the cover.
\end{enumerate}

\subsubsection*{Enumerating homogeneous sets}

The first step of equiwide clustering is to find all maximal homogeneous sets, that is, homogeneous sets that are not proper subsets of other homogeneous sets. With a radius constraint, this task is straightforward: each element $x_i$ is successively considered as a center to compute the associated homogeneous set $S_i$. This set contains all elements that are dissimilar from $x_i$ by at most $\rmax$, i.e., $S_i=\{e\ |\ d(x_i,e) \leq \rmax \}$. This process guarantees that all maximal homogeneous sets are found, but potentially non-maximal homogeneous sets are also computed. There are at most as many homogeneous sets as the number $n$ of elements to be partitioned.

\begin{proof}[Proof that $\left\{S_1, \dots, S_n \right\}$ contains all maximal homogeneous sets]
Let $S$ be a maximal radius-homogeneous set. We show that $S \in \left\{S_1, \dots, S_n \right\}$.
Let $x_i$ be the center of $S$. The element $x_i$ belongs to $X$, so there exists $S_i \in \left\{S_1, \dots, S_n \right\} $ such that $S_i=\left\{e\ |\ d(x_i,e) \leq \rmax \right\}$.
We will show that $S=S_i$. Let $a \in S$, as $S$ is homogeneous, we have $d(x_i, a) \leq \rmax$. According to the definition of $S_i$, we then have $a \in S_i$. This proves that $S \subseteq S_i$. As $S$ is maximal and $S_i$ is homogeneous, we conclude that $S=S_i$.
\end{proof}

For a diameter constraint, enumerating all maximal homogeneous sets is much more difficult. They are the maximal cliques in the associated graph $G=(X,E)$, where $E = \{\{x_i, x_j\} \mid d(x_i, x_j) \leq \dmax\}$.
A set of vertices $C \subseteq X$ is a clique if each vertex of $C$ is adjacent to every other vertex of $C$, i.e., there is an edge between each pair of element of $C$. A clique is maximal if it does not have any strict superset which is also a clique.

\begin{proposition}
Maximal diameter-homogeneous sets are the maximal cliques of $G$.
\end{proposition}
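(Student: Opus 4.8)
The plan is to reduce the proposition to a statement about set families: first show that an arbitrary subset $A \subseteq X$ is diameter-homogeneous under $\dmax$ if and only if $A$ is a clique of $G$, and then observe that, since the two notions pick out the same family of subsets, their maximal elements under inclusion coincide as well.

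For the first step I would simply unfold the definitions. By definition $A$ is diameter-homogeneous under $\dmax$ exactly when $\diam(A) = \max_{a, b \in A} d(a, b) \leq \dmax$, which holds precisely when $d(a, b) \leq \dmax$ for every pair $a, b \in A$. For distinct $a$ and $b$ this is the condition $\{a, b\} \in E$, i.e. that $a$ and $b$ are adjacent in $G$; for $a = b$ the inequality $d(a, a) = 0 \leq \dmax$ holds trivially and imposes nothing. Hence the requirement ``$d(a, b) \leq \dmax$ for all $a, b \in A$'' is equivalent to ``every vertex of $A$ is adjacent to every other vertex of $A$'', which is exactly the definition of $A$ being a clique of $G$. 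This yields the set-family equality $\{\, A \subseteq X : \diam(A) \leq \dmax \,\} = \{\, A \subseteq X : A \text{ is a clique of } G \,\}$.

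For the second step, note that in both formulations ``maximal'' means the same thing: an element of the family that is not a proper subset of any other element of the family. Since by the first step the family of diameter-homogeneous sets and the family of cliques of $G$ are literally the same family of subsets of $X$, a set is maximal in one iff it is maximal in the other; that is, $A$ is a maximal diameter-homogeneous set iff $A$ is a maximal clique of $G$. I do not expect any real obstacle in this argument; the only point requiring mild care is the boundary case $a = b$ (equivalently, the absence of self-loops in $G$), which must be checked so that the definition of clique, stated for distinct vertices, aligns exactly with the definition of diameter, whose maximum ranges over all pairs including $(a, a)$. Once this is noted, both directions follow immediately from the definitions.
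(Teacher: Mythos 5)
Your proposal is correct and follows essentially the same route as the paper: both prove that a subset is diameter-homogeneous if and only if it is a clique of $G$, and then conclude that the maximal elements of the two (identical) set families coincide. Your treatment is slightly more careful than the paper's in spelling out the $a = b$ boundary case and the final maximality step, but the argument is the same.
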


\begin{proof}
Let us show that $A \subseteq X$ is a diameter-homogeneous set if and only if $A$ is a clique of $G$. Let $A \subseteq X$ be a diameter-homogeneous set. Then, for all pairs of elements $(x_i, x_j)$ of $A$, we have $d(x_i, x_j) \leq \dmax$ because $A$ is homogeneous. Hence, $\{x_i, x_j\}$ is an edge of $E$ and $x_i$ and $x_j$ are adjacent. Consequently, each pair of elements of $A$ are adjacent, so $A$ is a clique.
Now, if $A$ is a clique, then all pairs of elements of $A$ are adjacent, and so they are dissimilar by at most $\dmax$, hence $A$ is diameter-homogeneous.
\end{proof}

Enumerating all maximum cliques of a graph is a well-studied problem with many available algorithms and implementations, e.g. \cite{mace,networkx}. This problem is NP-hard and can generate up to $3^{n/3}$ maximal diameter-homogeneous sets \cite{moon1965cliques}.

\subsubsection*{Computing a minimal set cover}

From the set of homogeneous sets, the second step consists in identifying a minimal cover of the population. This determines the number of clusters and, for the most part, the clusters themselves. The minimal set cover is a well-known NP-hard problem \cite{karp1972} defined as follows.
Let $S$ be a set of subsets of $X$, i.e., $S \subseteq \mathbb{P}(X)$. A minimal set cover of $X$ by $S$ is a set $T \subseteq S$ such that:
\begin{align*}
    & \bigcup_{E \in T} E = X \tag{cover} \\
    & \nexists\ T' \subseteq S,\ |T'|<|T|  \wedge \bigcup_{E \in T'} E = X\tag{minimality}
\end{align*}

The resulting cover $T$ is not necessarily a valid partition since sets of the cover are not required to be pairwise disjoint. Yet, clusters will be subsets of sets in this cover.
The minimality condition guarantees that the number of clusters is minimal.
Various methods exist to solve this problem, the result can either be a single minimal solution or an enumeration of all minimal solutions.
Enumerating all solutions can be beneficial---or event required---in order to optimize the considered sub-objective, by then selecting the most appropriate cover in the enumeration.

\subsubsection*{Assigning each element to a unique cluster}

Some elements can belong to several sets in the cover computed by the previous step. These are referred to as undecided elements.
We consider approximate assignement strategies for each sub-objective. To minimize both the within cluster sum of dissimilarities and the maximum diameter, undecided elements are assigned to the cluster with the closest center. This center is computed after removal of all undecided elements.
To maximize the cluster sizes variance, a greedy algorithm is implemented to assign undecided elements to the largest clusters.
In both cases, the homogeneity of resulting clusters is trivially guaranteed for a diameter constraint. However, with a maximum radius criterion, the center (radius) of each cluster should not be removed when assigning undecided elements.

\subsection{Implementation}

The current implementation of equiwide clustering only computes comprehensive enumerations of maximal homogeneous sets. This can be prohibitive with a diameter constraint due to the the large number ($3^{n/3}$) of maximal cliques in the worst case. However, some approximation strategies exist and could be considered \cite{li2021}.

The second step computes a minimal set cover, which determines the number of clusters. We have implemented three methods to solve this problem, depending on how optimal the result should be. The first method (EQW-Exhaustive) consists in enumerating all minimal solutions \cite{shd}. The second method (EQW-LP) uses integer linear programming to find a single minimal solution \cite{vazirani2013approximation}. 
In this second implementation, variables $x_i$ denote whether set $S_i$ of $S$ should be included in the cover of $X$. The objective is to minimize the number of sets in the cover while covering every element by at least one set.
\begin{alignat*}{3}
        & \text{minimize }      &       & \sum\limits_{i | S_i \in S} x_i \\
        & \text{subject to }    & \quad & \sum\limits_{i | S_i \in S \land e\in S_i} x_i \geq 1 & \quad & e \in X \\
        &                       &       & x_i \in \{0,1\} & \quad\quad & i \ | \ S_i \in S
\end{alignat*}

The third method (EQW-Greedy) is a greedy algorithm which selects the largest clusters until all elements are covered. This algorithm finds a set cover in polynomial time, and is $H_n$-competitive with $H_n = \sum_{k=1}^n \frac{1}{k} \leq ln(n) + 1$ \cite{chvatal1979greedy}.

These three possibilities to compute the set cover impact both the optimality of the solution on the primary or secondary objectives, and the execution time.

\section{Experimental Evaluation}

\label{sec:experiments}

In order to compare the algorithms described in this paper, an experimental evaluation has been conducted. Considered algorithms from the state of the art are listed below, with their objectives summarized in Table \ref{tab:tested-alg}.
\begin{itemize}
    \item Complete-link hierarchical agglomerative clustering (HAC), using the scikit-learn implementation  \cite{scikit-learn}.
    \item Exact graph coloring, using the \textsc{dsatur}-based implementation provided by Mehrotra and Trick \cite{mehrotra1996column}.
    \item Graph coloring with minimization of the maximum diameter of clusters \cite{hansen1978complete} (denoted CG, for \textsc{clustergraph}), implemented in Python and relying on \textsc{dsatur}.
    \item Constraint programming for constrained clustering \cite{duong2017constrained} (CP4CC), using the reference implementation provided by the authors.
\end{itemize}

We compared these four algorithms to the propositions introduced in this paper, namely:
\begin{itemize}
    \item Linear programming (LP), implemented with the Coin-or branch and cut solver.
    \item Constraint programming (CP), implemented with Google's OR-Tools CP-SAT solver.
    \item All variants of equiwide clustering (EQW denotes this family of algorithms in general, while EQW-LP and EQW-Greedy identify respectively the linear programming and the greedy implementations for the computation of the minimal set cover).
\end{itemize}

\begin{table*}
\centering
\caption{Evaluated algorithms}
\label{tab:tested-alg}
\newlength{\arrskip}
\setlength{\arrskip}{0.3em}
\begin{tabular}{lclcl}
  \toprule
  \thead{Algorithm} & \thead{Minimal\\\#clusters} & \thead{Sub-objective} & \thead{Optimal} & \thead{Wideness\\constraint}  \tabularnewline \midrule
  Complete-link HCA \cite{scikit-learn}
    & no          & --                                   & --  & diameter \\\addlinespace[\arrskip]
  \textsc{dsatur} \cite{mehrotra1996column}
    & yes         & --                                   & --  & diameter \\\addlinespace[\arrskip]
  \textsc{clustergraph} \cite{hansen1978complete}
   & yes          & maximum diameter                     & yes & diameter \\\addlinespace[\arrskip]
  CP4CC \cite{duong2017constrained}
    & iteratively & \makecell[l]{maximum diameter\\WCSD} & yes & diameter \\\addlinespace[\arrskip]
  Equiwide clustering
    & yes        & \makecell[l]{variance\\WCSD}          & no  & \makecell[l]{diameter\\radius} \\\addlinespace[\arrskip]
  Integer linear programming
    & yes        & --                                    & --  & \makecell[l]{diameter\\radius} \\\addlinespace[\arrskip]
  Constraint programming
    & yes        & --                                    & --  & \makecell[l]{diameter\\radius} \\
\bottomrule
\end{tabular}
\end{table*}

\begin{table*}[t]
\sisetup{range-phrase=--}
\caption{Experimental evaluation datasets}
\label{tab:dataset}
\centering
\begin{tabular}{@{}lcS[table-format=4.1]@{}cS[table-format=4.4,table-column-width=1.6cm]S[table-format=2.1,table-column-width=1.1cm]ccS[table-format=4.2,table-column-width=1.0cm]@{}}
\toprule
  {\thead{Name}} & {\thead{Origin}} & \multicolumn{2}{c}{\thead{\#Elements}} & {\thead{\#Dimensions}} & {\thead{\#Classes}} & {\thead{Normalized (disparity)}} & {\thead{Dissimilarity}} & {\thead{D\textsubscript{max}}} \tabularnewline \midrule
  Iris & UCI & 150 & & 4 & 3 & no (2.5) & ED & 2.59e0  \tabularnewline
  Wine & UCI & 178 & & 13 & 3 & no ($2.6 \times 10^3$) & ED & 458.14 \tabularnewline
  Glass Identification & UCI & 214 & & 9 & 7 & no ($4.7 \times 10^2$) & ED & 4.98 \tabularnewline
  Ionosphere & UCI & 351 & & 34 & 2 & yes &  ED & 8.7 \tabularnewline
  User Knowledge & UCI & 403 & & 5 & 4 & yes & ED & 1.18 \tabularnewline
  WDBC & UCI & 569 & & 30 & 2  & no ($1.4 \times 10^5$) & ED & 2377.97 \tabularnewline
  Synthetic Control & UCI & 600 & & 60 & 6 & yes & ED & 109.37 \tabularnewline
  Vehicle & UCI & 846 & & 18 & 4 & no ($7.0 \times 10^1$) & ED & 264.84 \tabularnewline
  Yeast & UCI & 1484 & & 8 & 10 & yes & ED & 0.68 \tabularnewline
  Image Segmentation & UCI & 2100 & & 19 & 7 & no ($6.2 \times 10^3$) & ED & 436.5 \tabularnewline
  Waveform & UCI & 5000 & & 40 & 3 & yes & ED & 15.7 \tabularnewline
  Monthly elec. power & SRD & 145 & {($\times$ 12)} & \SIrange{4032}{4464}{} & unknown & yes & DTW & 0.1 \tabularnewline
  Yearly elec. power & SRD & 1740 & & \SIrange{4032}{4464}{} & unknown & yes & DTW & 0.1 \tabularnewline \bottomrule
\multicolumn{9}{l}{ED: Euclidean Distance} \\
\multicolumn{9}{l}{DTW: Dynamic Time Warping with a window of 4} \\
\end{tabular}
\end{table*}

These algorithms are tested on 11 datasets from the UCI Machine Learning Repository \cite{uci}. The maximum diameter threshold is taken from \cite{duong2017constrained}, where this diameter was computed by minimizing the maximum diameter of the clusters when the number of clusters is equal to the number of real classes\footnote{For image segmentation, the optimal diameter had been computed on only 2000 of the 2100 elements, resulting in a lower threshold.}. In these experiments, thresholds are rounded up to avoid round-off error. The considered dissimilarity is the Euclidean distance. Half (five) of these datasets were normalized by their authors, meaning that the ranges of their attributes are comparable, either inherently or as a post-processing step. The other half (six) were not normalized, the weight of their attributes can therefore vary significantly---by a factor of up to 140,000 for WDBC---when computing the Euclidean distance.
Table \ref{tab:dataset} provides a summary of the datasets used in this experimental evaluation. When the dataset is not normalized, an indication of the disparity between attributes is provided as the ratio between the largest and smallest ranges.

Algorithms are also tested on 12+1 datasets provided by SRD.
Each element in these datasets is a time series of electric power passing through one of the 145 busbar output of step-down substations\footnote{Out of the 180 available, 35 outputs were not included in this study due to maintenance operations over the period and other practical considerations.}.
These time series are normalized with respect to the total power subscription of connected clients.
Each of the 12 datasets corresponds to a month of 2017 with a sampling period of ten minutes. A 13th dataset is constructed as the union of the previous 12, therefore containing $145 \times 12$ time series\footnote{Series of different length were truncated during the computation of the DTW dissimilarity to accommodate for months with fewer than 31 days.}. 
Each dataset is converted into a dissimilarity matrix using Dynamic Time Warping (DTW) with a window of 4, which serves as the input of clustering algorithms. The computation time of this dissimilarity matrix is not taken into account in these experiments.
The maximum diameter threshold is 0.1, which has been determined by network operators at SRD as an acceptable margin.

Each algorithm is executed on each dataset with four randomized ordering of elements. For algorithms with a radius constraint, the maximum radius is defined as half the value of the maximum diameter.
Experiments are performed on an 2.40~GHz Intel Xeon CPU E5-2630 processor, with 32~GB of RAM, running Ubuntu 18.04. The Python interpreter is CPython 3.8.0. Each test has a maximum run time of ten minutes.
None of the algorithms is parallelized.

\medskip

\begin{table*}
\sisetup{range-phrase=--}
\robustify\bfseries
\sisetup{detect-weight, range-units=single}
\centering
\caption{Number of clusters}
\label{tab:nb_clust_uci}
\begin{tabular}{lr@{}lS[table-format=3.0]r@{}lr@{}lS[table-format=2.0]r@{}lcS[table-format=2.0]}
  \toprule
   & \multicolumn{3}{c}{\thead{Radius}} & \multicolumn{5}{c}{\thead{Diameter}} & \multicolumn{4}{c}{\thead{1.2 $\times$ Diameter}} \tabularnewline 
   \cmidrule(r){2-4}
   \cmidrule(lr){5-9}
   \cmidrule(l){10-13}
  \thead{Dataset} & \multicolumn{2}{c}{\makecell{EQW-Greedy}} & {Opt.} & \multicolumn{2}{c}{\makecell{EQW-Gr.}} & \multicolumn{2}{c}{HAC} & {Opt.} & \multicolumn{2}{c}{\makecell{EQW-Gr.}} & HAC & {Opt.} \tabularnewline\midrule
  Iris                 &      \bfseries 4 &       & \bfseries   4 &   \hspace{1em} 4 &     &           4 &      & \bfseries   3 &   \hspace{1em} 4 &     &           4 & \bfseries   3 \tabularnewline
  Wine                 &      \bfseries 4 &   --5 & \bfseries   4 &                4 &     &           4 &      & \bfseries   3 &      \bfseries 3 & --4 &           4 & \bfseries   3 \tabularnewline
  Glass Identification &     \bfseries 13 &  --14 & \bfseries  13 &                8 & --9 &          10 &      & \bfseries   7 &      \bfseries 6 &     &           7 & \bfseries   6 \tabularnewline
  Ionosphere           &     \bfseries 28 &  --29 & \bfseries  28 &      \bfseries 2 & --3 &           4 &      & \bfseries   2 &      \bfseries 1 &     & \bfseries 1 & \bfseries   1 \tabularnewline
  User knowledge       & \hspace{1.5em} 9 &  --11 & \bfseries   8 &               -- &     &           7 &      & \bfseries   4 &      \bfseries 2 & --3 & \bfseries 2 & \bfseries   2 \tabularnewline
  WDBC                 &      \bfseries 3 &       & \bfseries   3 &      \bfseries 2 &     &           3 &      & \bfseries   2 &      \bfseries 2 &     & \bfseries 2 & \bfseries   2 \tabularnewline
  Synthetic Control    &               16 &  --17 & \bfseries  14 &               -- &     &           9 &      & \bfseries   6 &      \bfseries 4 & --6 &           5 & \bfseries   4 \tabularnewline
  Vehicle              &                6 &   --7 & \bfseries   5 &                5 &     &           6 &      & \bfseries   4 &      \bfseries 4 &     &           5 & \bfseries   4 \tabularnewline
  Yeast                &               21 &       & \bfseries  18 &               -- &     &          14 & --15 & \bfseries  10 &                8 & --9 &           9 & \bfseries   7 \tabularnewline
  Image Segmentation   &               13 &       & \bfseries  12 &      \bfseries 8 &     & \bfseries 8 &      & \bfseries   8 &      \bfseries 5 &     & \bfseries 5 & \bfseries   5 \tabularnewline
  Waveform             &              168 & --172 & \bfseries 149 &               -- &     &           6 &      & \bfseries   3 &               -- &     &           3 & \bfseries   2 \tabularnewline
  Elec. power (July)   &     \bfseries 13 & --14  & \bfseries  13 &                9 &     &          10 &      & \bfseries   8 \tabularnewline
  Yearly elec. power   &     \bfseries 16 &       & \bfseries  16 &               -- &     &          30 &      & \bfseries  19 \tabularnewline \bottomrule
  \multicolumn{12}{l}{Execution times exceeding the 10 minutes limit are denoted by --} \\
\end{tabular}
\end{table*}

\subsubsection*{Number of clusters}
We compare separately algorithms allowing a diameter constraint and algorithms allowing a radius constraint, but the same criteria of quality apply. First, every clustering has to satisfy the provided wideness constraint, which is the case in every experiment. Then, the main quality criterion is the number of resulting clusters, as the objective is to minimize this value. Experimental results are displayed in Table \ref{tab:nb_clust_uci} (columns Radius and Diameter). In this table, the Opt. column refers to all algorithms which gave the same optimal number of clusters.
When two values are provided, they represent the minimum and the maximum number of clusters among the different orderings.
As expected by their design, every algorithm except for HAC and the greedy variant of EQW provides the optimal number of cluster.

Since the specified diameter threshold is tight for the UCI datasets, we also experimented with a threshold set to 1.2 times the optimal diameter, in order to evaluate the impact of having relaxed constraints and potentially a larger solution space. Results are presented in Table \ref{tab:nb_clust_uci} (column 1.2 $\times$ Diameter). With these new thresholds, results from both approximate algorithms become closer to optimal values.

Overall, the greedy version of EQW usually provides slightly fewer clusters than HAC, with some guarantees related to its $H_n$-competitiveness.
For instance, with the ionosphere dataset, the greedy version of EQW guarantees to find a solution with at most $\lfloor2 H_2\rfloor = 3$ clusters, while HAC provides a solution with 4 clusters.

\subsubsection*{Execution time}
With diameter constraints, the constraint programming implementation significantly outperforms the linear programming implementation. On the opposite, with radius constraints, the linear programming implementation outperforms the constraint programming one. Only results for the best of these two baseline implementations are presented.

On all datasets except for iris, wine and monthly electric power, the variant of EQW that performs an exhaustive enumeration of minimal set covers fails to provide a solution within 10 minutes. Its results are therefore omitted from this analysis.
With diameter constraints, the predominance of the first step---the enumeration of homogeneous sets---in the execution time causes all variants of EQW to be within a few percent of each other. Their execution times are consequently combined under the EQW label.
With radius constraints, the second step---the computation of a minimal set cover---becomes predominant and variants of EQW are then split between the EQW-Greedy and the EQW-LP labels.

The execution time on the monthly electric power datasets being similar, only a representative month, July, is reported in the results.
Execution times are reported in Table \ref{tab:time_diameter} for diameter constraints, and Table \ref{tab:time_radius} for radius constraints.

\begin{table*}
\centering
\caption{Execution time (in seconds) for algorithms with a diameter constraint}
\label{tab:time_diameter}
\begin{tabular}{lr@{$\,$}lr@{$\,$}lr@{$\,$}lr@{$\,$}lr@{$\,$}lr@{$\,$}l}
\toprule
\thead{Dataset} &
\multicolumn{2}{c}{\thead{HAC}}    &
\multicolumn{2}{c}{\thead{DSATUR}} &
\multicolumn{2}{c}{\thead{CG}}     &
\multicolumn{2}{c}{\thead{CP4CC}}  &
\multicolumn{2}{c}{\thead{CP}}     &
\multicolumn{2}{c}{\thead{EQW}}    \\
\midrule
                Iris & \multicolumn{2}{c}{$< 0.01$} &   0.07 & $\pm\,$0.0 &   0.14 & $\pm\,$0.0 &    0.11 & $\pm\,$0.0 &    0.47 & $\pm\,$0.0 &   0.07 & $\pm\,$0.1 \\
                Wine & \multicolumn{2}{c}{$< 0.01$} &   0.08 & $\pm\,$0.0 &   0.15 & $\pm\,$0.0 &    0.20 & $\pm\,$0.0 &    0.44 & $\pm\,$0.0 &   0.10 & $\pm\,$0.0 \\
Glass Identification & \multicolumn{2}{c}{$< 0.01$} &   0.09 & $\pm\,$0.0 &   0.18 & $\pm\,$0.0 &    0.52 & $\pm\,$0.0 &    0.26 & $\pm\,$0.0 &   0.12 & $\pm\,$0.1 \\
          Ionosphere & \multicolumn{2}{c}{$< 0.01$} &   0.08 & $\pm\,$0.0 &   0.10 & $\pm\,$0.0 &    0.74 & $\pm\,$0.0 &    0.02 & $\pm\,$0.0 &   0.05 & $\pm\,$0.0 \\
      User Knowledge & \multicolumn{2}{c}{$< 0.01$} &   0.13 & $\pm\,$0.0 &   0.22 & $\pm\,$0.0 &    0.85 & $\pm\,$0.0 &    0.22 & $\pm\,$0.0 &     -- &            \\
                WDBC &            0.01 & $\pm\,$0.0 &   0.29 & $\pm\,$0.0 &   0.42 & $\pm\,$0.0 &    1.54 & $\pm\,$0.1 &    0.55 & $\pm\,$0.1 &   0.14 & $\pm\,$0.0 \\
   Synthetic Control &            0.01 & $\pm\,$0.0 &   0.62 & $\pm\,$0.1 &   1.07 & $\pm\,$0.0 &    8.29 & $\pm\,$0.0 &   11.44 & $\pm\,$6.7 &     -- &            \\
             Vehicle &            0.02 & $\pm\,$0.0 &   1.08 & $\pm\,$0.0 &   1.76 & $\pm\,$0.0 &    5.49 & $\pm\,$0.1 &   23.50 & $\pm\,$0.7 &  45.46 & $\pm\,$3.7 \\
               Yeast &            0.06 & $\pm\,$0.0 &   2.29 & $\pm\,$0.0 &   4.83 & $\pm\,$3.9 &   29.87 & $\pm\,$0.1 &   10.11 & $\pm\,$1.1 &     -- &            \\
  Image Segmentation &            0.13 & $\pm\,$0.0 &   4.64 & $\pm\,$0.1 &   9.17 & $\pm\,$1.1 &   70.68 & $\pm\,$1.5 &    4.68 & $\pm\,$0.6 &   1.31 & $\pm\,$0.1 \\
            Waveform &            0.91 & $\pm\,$0.0 &  32.68 & $\pm\,$1.7 &  41.38 & $\pm\,$2.6 &  239.98 & $\pm\,$6.8 &  483.65 & $\pm\,$23  &     -- &            \\
  Elec. power (July) & \multicolumn{2}{c}{$< 0.01$} &   0.03 & $\pm\,$0.0 &   0.07 & $\pm\,$0.0 &   11.58 & $\pm\,$TL  &    0.14 & $\pm\,$0.0 &   0.04 & $\pm\,$0.0 \\
  Yearly elec. power &            0.10 & $\pm\,$0.0 &   3.87 & $\pm\,$TL  &     -- &            &      -- &            &   55.60 & $\pm\,$TL  &     -- &            \\
\bottomrule
\multicolumn{13}{l}{$\pm\,$TL indicates that the 10 minutes limit has been exceeded in at least one of the four runs}
\end{tabular}
\end{table*}

\begin{table}
\centering
\caption{Execution time for algorithms with a radius constraint}
\label{tab:time_radius}

\begin{tabular*}{\columnwidth}{@{}lr@{$\,$}l@{$\ $}lR@{$\,$}l@{$\ $}lR@{$\,$}l@{$\ $}l@{}}
\toprule
\thead{Dataset} &           \multicolumn{3}{c@{\hspace{2em}}}{\thead{EQW-Greedy}} &            \multicolumn{3}{c@{\hspace{2em}}}{\thead{EQW-LP}} &        \multicolumn{3}{c}{\thead{LP}}\\
\midrule
                Iris &  5.43 & $\pm\,$0.56 & ms & 45.51 & $\pm\,$3.25 & ms &   0.78 & $\pm\,$0.04 & s \\
                Wine &  7.97 & $\pm\,$0.55 & ms & 50.30 & $\pm\,$1.57 & ms &   2.28 & $\pm\,$1.02 & s \\
Glass Identification & 14.61 & $\pm\,$0.82 & ms & 67.35 & $\pm\,$5.07 & ms &   1.77 & $\pm\,$0.10 & s \\
          Ionosphere & 55.78 & $\pm\,$1.65 & ms &  0.17 & $\pm\,$0.02 & s  &   6.60 & $\pm\,$1.09 & s \\
      User Knowledge & 46.74 & $\pm\,$0.96 & ms &  0.24 & $\pm\,$0.01 & s  &  18.12 & $\pm\,$10.0 & s \\
                WDBC &  0.21 & $\pm\,$0.03 & s  &  0.34 & $\pm\,$0.02 & s  & 157.47 & $\pm\,$111  & s \\
   Synthetic Control & 74.58 & $\pm\,$5.00 & ms &  0.23 & $\pm\,$0.01 & s  &  26.28 & $\pm\,$11.2 & s \\
             Vehicle &  0.37 & $\pm\,$0.01 & s  &  0.80 & $\pm\,$0.01 & s  &  78.36 & $\pm\,$130  & s \\
               Yeast &  2.43 & $\pm\,$0.04 & s  &  7.34 & $\pm\,$0.13 & s  &     -- &             &   \\
  Image Segmentation & 11.27 & $\pm\,$0.08 & s  & 16.26 & $\pm\,$0.27 & s  &     -- &             &   \\
            Waveform &  7.69 & $\pm\,$0.21 & s  &   --  &             &    &     -- &             &   \\
  Elec. power (July) &  8.05 & $\pm\,$0.44 & ms & 55.76 & $\pm\,$3.58 & ms &   0.86 & $\pm\,$0.01 & s \\
  Yearly elec. power &  6.64 & $\pm\,$0.10 & s  & 11.74 & $\pm\,$0.09 & s  &     -- &             &   \\
\bottomrule
\end{tabular*}
\end{table}

\medskip

HAC is by far the fastest algorithm, but it rarely reaches the minimal number of clusters. The next fastest algorithm is \textsc{dsatur}, followed by \textsc{clustergraph}. The former is always faster than the latter since \textsc{clustergraph} makes multiple calls---at least two---to \textsc{dsatur}.

In the reference implementation of CP4CC, the FPF heuristic can not be used with a dissimilarity matrix as the input. Consequently, on the datasets provided by SRD, this heuristic is not used and CP4CC exceeds the time limit of 10 minutes on 26 of the 48 experiments performed with the monthly electric power datasets, despite them being relatively simple compared with other datasets provided by UCI. This behavior highlights the influence of heuristics to solve the equiwide clustering problem.
The FPF reordering also allows CP4CC to be independent on the innate ordering of the input data.
Otherwise, exact algorithms---\textsc{DSATUR}, CP and CP4CC without the FPF heuristic---are all dependent on this ordering, as most notably seen on the SRD dataset, where there is at least a tenfold difference between the fastest and slowest run.

EQW does not follow the performance pattern of other algorithms, for which the execution time is mostly related to the number of elements in the dataset. This is explained by the potentially large number of homogeneous sets that have to be computed. For instance, EQW could not complete on User Knowledge which only contains 403 elements, and for which there can theoretically be up to $1.2 \times 10^{64}$ maximal homogeneous sets. In fact, after ten minutes, EQW already identified $3.9 \times 10^{7}$ maximal homogeneous sets. The main issue in addressing this problem is that it is not possible to efficiently predict this behavior from the dissimilarity matrix.
This limitation on how EQW computes homogeneous sets with a diameter constraint severely limits its ability to provide exact solutions in a reasonable time compared to approaches from the state of the art. It then seems only appropriate to use EQW if this step can be approximated upon reaching an excessive number of homogeneous sets.

\medskip

The limitation on homogeneous sets is not present with radius constraints. EQW then becomes a competitive solution, the baseline linear programming and constraint programming implementations being comparatively much slower.
The greedy variant of EQW does not provide a minimal number of clusters, but is applicable on every considered dataset since its overall complexity is polynomial. 
Alternatively, potentially better approximate solutions can be provided with a slight modification of the linear programming implementation of EQW. For example with the waveform dataset, EQW-LP fails to provide an optimal result within 10 minutes. However, the underlying solver allows for a time limit to be set and, with a one minute limit, finds an approximate solution with 150 clusters---instead of the optimal value of 149---that still improves upon the 168 clusters solution identified by the greedy variant.

\section{Conclusion}

In this work, we provide an homogeneous vision of two problems from graph theory under a common clustering perspective. This leads to the definition of the equiwide clustering problem.
The underlying dissimilarity measure becomes a core component that must be intelligible, as a threshold has to be provided according to the application needs.
Given the importance of both the dissimilarity and the threshold, a semi-supervised approach could be considered to assist the user in this task, for instance using metric learning \cite{bilenko2004}.

For distance-based intra-cluster constraints, graph coloring-based algorithms appear to be the most promising solutions from the state of the art. For larger datasets, the complete-link agglomerative hierarchical clustering provides a valid yet non-optimal solution in a much shorter time. EQW, in both its exact and approximate variants, is not competitive in this setting.

The extent to which sub-objectives can be integrated into minimal cardinality dominating set computation algorithms remains to be determined in order to assess their applicability as clustering algorithms for radius constraints.
Yet, EQW fills this gap and offers exact or approximate solutions with acceptable performance when a representative element for each cluster has to be selected from the original dataset.

\section*{Acknowledgment}

This work was supported by Région Nouvelle-Aquitaine and the aLIENOR ANR LabCom (ANR-19-LCV2-0006).

\bibliography{eqwclustering}

\begin{thebibliography}{10}
\providecommand{\url}[1]{#1}
\csname url@samestyle\endcsname
\providecommand{\newblock}{\relax}
\providecommand{\bibinfo}[2]{#2}
\providecommand{\BIBentrySTDinterwordspacing}{\spaceskip=0pt\relax}
\providecommand{\BIBentryALTinterwordstretchfactor}{4}
\providecommand{\BIBentryALTinterwordspacing}{\spaceskip=\fontdimen2\font plus
\BIBentryALTinterwordstretchfactor\fontdimen3\font minus
  \fontdimen4\font\relax}
\providecommand{\BIBforeignlanguage}[2]{{%
\expandafter\ifx\csname l@#1\endcsname\relax
\typeout{** WARNING: IEEEtran.bst: No hyphenation pattern has been}%
\typeout{** loaded for the language `#1'. Using the pattern for}%
\typeout{** the default language instead.}%
\else
\language=\csname l@#1\endcsname
\fi
#2}}
\providecommand{\BIBdecl}{\relax}
\BIBdecl

\bibitem{milligan1985}
G.~W. Milligan and M.~C. Cooper, ``An examination of procedures for determining
  the number of clusters in a data set,'' \emph{Psychometrika}, vol.~50, no.~2,
  pp. 159--179, 1985.

\bibitem{johnson1996}
D.~J. Johnson and M.~A. Trick, \emph{Cliques, Coloring, and Satisfiability:
  Second DIMACS Implementation Challenge, Workshop, October 11-13, 1993}.\hskip
  1em plus 0.5em minus 0.4em\relax American Mathematical Society, 1996.

\bibitem{hansen1997cluster}
P.~Hansen and B.~Jaumard, ``Cluster analysis and mathematical programming,''
  \emph{Mathematical programming}, vol.~79, no.~1, pp. 191--215, 1997.

\bibitem{duong2017constrained}
T.-B.-H. Dao, K.-C. Duong, and C.~Vrain, ``{Constrained clustering by
  constraint programming},'' \emph{Artificial Intelligence}, vol. 244, pp.
  70--94, 2017.

\bibitem{karp1972}
R.~M. Karp, \emph{{Reducibility among Combinatorial Problems}}.\hskip 1em plus
  0.5em minus 0.4em\relax Springer, 1972, pp. 85--103.

\bibitem{garey1979}
M.~R. Garey and D.~S. Johnson, \emph{{Computers and Intractability; A Guide to
  the Theory of NP-Completeness}}.\hskip 1em plus 0.5em minus 0.4em\relax W. H.
  Freeman \& Co., 1979.

\bibitem{copkmeans}
K.~Wagstaff, C.~Cardie, S.~Rogers, and S.~Schr\"{o}dl, ``{Constrained K-Means
  Clustering with Background Knowledge},'' in \emph{Proceedings of the 18th
  International Conference on Machine Learning}, 2001, p. 577–584.

\bibitem{hansen1978complete}
P.~Hansen and M.~Delattre, ``{Complete-Link Cluster Analysis by Graph
  Coloring},'' \emph{Journal of the American Statistical Association}, vol.~73,
  no. 362, pp. 397--403, 1978.

\bibitem{cdbscan}
C.~Ruiz, M.~Spiliopoulou, and E.~Menasalvas, ``{C-DBSCAN: Density-Based
  Clustering with Constraints},'' in \emph{Rough Sets, Fuzzy Sets, Data Mining
  and Granular Computing}, 2007, pp. 216--223.

\bibitem{mace}
K.~Makino and T.~Uno, ``{New Algorithms for Enumerating All Maximal Cliques},''
  in \emph{Algorithm Theory}, 2004, pp. 260--272.

\bibitem{networkx}
{Yun Zhang}, F.~N. {Abu-Khzam}, N.~E. {Baldwin}, E.~J. {Chesler}, M.~A.
  {Langston}, and N.~F. {Samatova}, ``{Genome-Scale Computational Approaches to
  Memory-Intensive Applications in Systems Biology},'' in \emph{Proceedings of
  the 2005 ACM/IEEE Conference on Supercomputing}, 2005.

\bibitem{moon1965cliques}
J.~W. Moon and L.~Moser, ``On cliques in graphs,'' \emph{Israel journal of
  Mathematics}, vol.~3, no.~1, pp. 23--28, 1965.

\bibitem{li2021}
X.~Li, R.~Zhou, L.~Chen, Y.~Zhang, C.~Liu, Q.~He, and Y.~Yang, ``{Finding a
  Summary for All Maximal Cliques},'' in \emph{Proceedings of the 37th {IEEE}
  International Conference on Data Engineering}, 2021, pp. 1344--1355.

\bibitem{shd}
K.~Murakami and T.~Uno, ``Efficient algorithms for dualizing large-scale
  hypergraphs,'' \emph{Discrete Applied Mathematics}, vol. 170, pp. 83--94,
  2014.

\bibitem{vazirani2013approximation}
V.~V. Vazirani, \emph{{Approximation Algorithms}}.\hskip 1em plus 0.5em minus
  0.4em\relax Springer Science \& Business Media, 2013.

\bibitem{chvatal1979greedy}
V.~Chvatal, ``{A Greedy Heuristic for the Set-Covering Problem},''
  \emph{Mathematics of operations research}, vol.~4, no.~3, pp. 233--235, 1979.

\bibitem{scikit-learn}
F.~Pedregosa, G.~Varoquaux, A.~Gramfort, V.~Michel, B.~Thirion, O.~Grisel,
  M.~Blondel, P.~Prettenhofer, R.~Weiss, V.~Dubourg, J.~Vanderplas, A.~Passos,
  D.~Cournapeau, M.~Brucher, M.~Perrot, and E.~Duchesnay, ``{Scikit-learn:
  Machine Learning in {P}ython},'' \emph{Journal of Machine Learning Research},
  vol.~12, pp. 2825--2830, 2011.

\bibitem{mehrotra1996column}
A.~Mehrotra and M.~A. Trick, ``{A column generation approach for graph
  coloring},'' \emph{INFORMS Journal on Computing}, vol.~8, no.~4, pp.
  344--354, 1996.

\bibitem{uci}
\BIBentryALTinterwordspacing
D.~Dua and C.~Graff, ``{{UCI} Machine Learning Repository},'' 2017. [Online].
  Available: \url{http://archive.ics.uci.edu/ml}
\BIBentrySTDinterwordspacing

\bibitem{bilenko2004}
M.~Bilenko, S.~Basu, and R.~J. Mooney, ``{Integrating Constraints and Metric
  Learning in Semi-Supervised Clustering},'' in \emph{Proceedings of the 21st
  International Conference on Machine Learning}, 2004, pp. 839--846.

\end{thebibliography}

\end{document}